%% file: main.tex
\documentclass[runningheads]{llncs}
\usepackage[T1]{fontenc}
\usepackage{graphicx}
\usepackage[textsize=small]{todonotes}

\usepackage{url}
\newtheorem{assumption}{Assumption}

\input{headers/commands}
\input{headers/notation}
\input{headers/gls}

\begin{document}
\title{Permissive Safety Through Trusted Inference: \\ Verifiable Belief-Space Neural Safety Filters for Assured Interactive Robotics}
\titlerunning{Verifiable Belief-Space Safety Filters}
\author{Haimin Hu\orcidID{0000-0002-4217-4776}}
\authorrunning{H. Hu}

\institute{Department of Computer Science, Johns Hopkins University, USA \\
\email{haimin@cs.jhu.edu}
}
\maketitle              
\setcounter{footnote}{0}

\begin{abstract}
Autonomous robots that interact with people must make safe and efficient decisions under human-induced uncertainty, such as their preferences, goals, competency, and willingness to cooperate.
Safety filters are a popular approach for ensuring safety in interactive robotics, since their modular design separates safety from performance, allowing robots to operate safely around people with minimal impact on task efficiency.
While traditional safety filters typically operate only in the physical space, neglecting the robot's ability to learn and adapt online, the recently proposed belief-space safety filter (\beliefsf) reasons about robot safety in closed-loop with runtime inference that actively reduces the robot's uncertainty online, thereby reducing conservativeness in filtering.
However, providing formal safety guarantees for robots deploying \beliefsf~remains a significant challenge due to errors in runtime inference and neural approximation of safety filters required to handle the high dimensionality of belief spaces.
In this paper, we propose an algorithmic approach to certify high-probability safety of \beliefsf~using conformal prediction, while explicitly accounting for the reliability of the robot's runtime inference module.
Our method leverages the structure of belief-space safety filtering by focusing verification on a region where inference is expected to be reliable.
It preserves the simplicity and sample complexity of standard conformal prediction, yet can certify a substantially less conservative safety filter.
Through a simulated human--vehicle interaction benchmark, we show that our approach verifies a significantly more permissive belief-space safety filter than a standard conformal prediction baseline.

\keywords{Neural Safety Filters \and
Safety Verification \and
Conformal Prediction \and
Assured Interaction \and
Belief-Space Motion Planning}
\end{abstract}
%

\section{Introduction}
\label{sec:intro}

\begin{figure}[!hbtp]
  \centering
  \includegraphics[width=1.0\columnwidth]{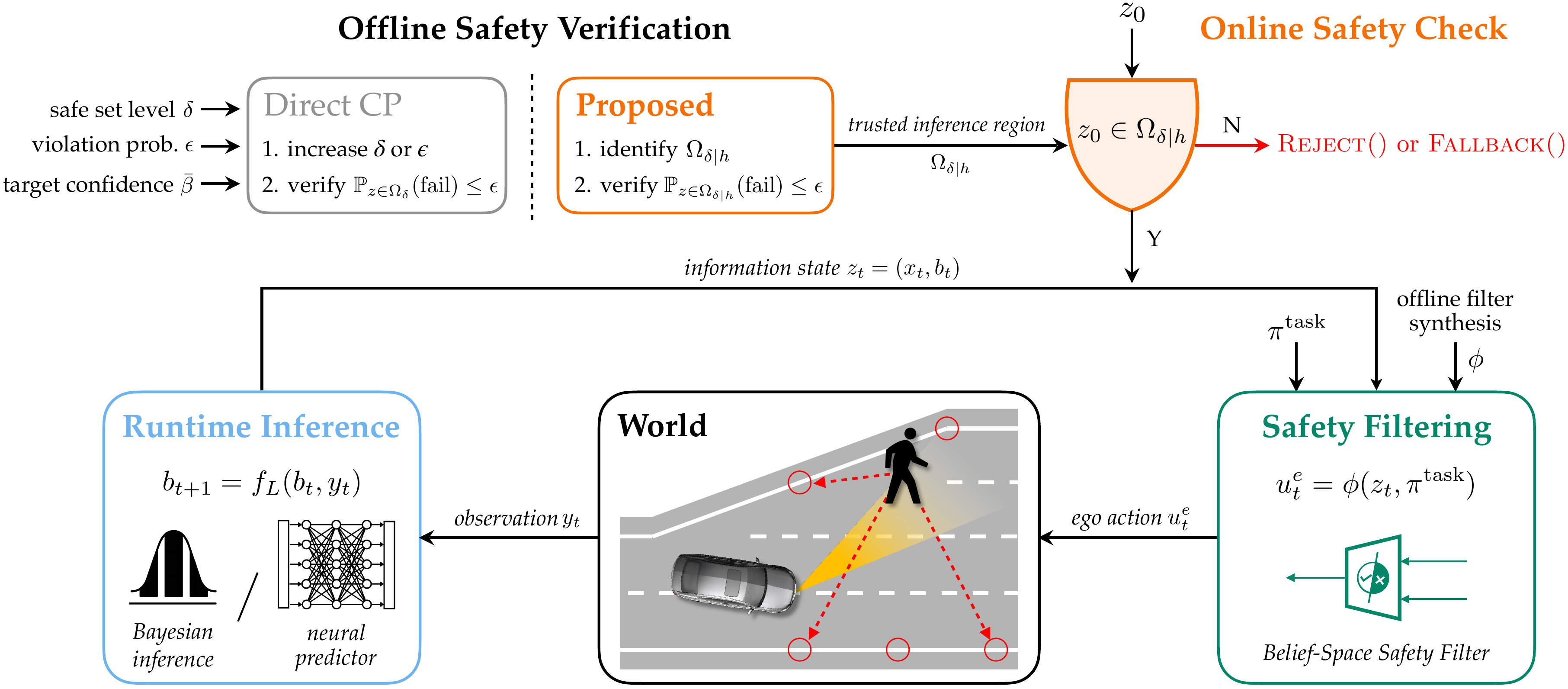}
  \caption{\label{fig:front_fig} Offline safety verification and online deployment of belief-space safety filters.
  Compared to direct conformal prediction, our proposed verification scheme jointly reasons about safety and inference, yielding a larger verified safe set and, consequently, a more permissive safety filter.
  }
\end{figure}

Autonomous robots are increasingly deployed in safety-critical and human-populated environments, from social navigation to mixed-autonomy traffic and human--robot teaming in warehouses, 
where human intentions are unobserved and their behaviors may be strongly influenced by the robot's actions.
In such settings, safety involves more than physical interactions (\eg, collision avoidance); it also depends on \emph{what the robot believes} about other agents and how it updates those beliefs online as new information is gathered during interaction.
This coupling between inference and decision-making is central to interactive robotics, yet it poses a fundamental challenge for safety: even when a control policy is nominally safe under perfect information, incorrect or delayed inference can lead to catastrophic failures.
On the other hand, if the robot's safety mechanism ignores its ability to learn from future observations and reduce uncertainty over the course of an interaction, it can become overly conservative and unnecessarily restrict performance~\cite{althoff2014online,hu2022sharp,borquez2023parameterconditioned}.

The recently proposed belief-space safety filters (\beliefsf)~\cite{hu2023deception} offer a principled way to account for the evolving uncertainty by closing the loop between the robot's safety analysis and runtime inference that updates its belief about other agents.
In particular, \beliefsf~assesses safety in an augmented state space that describes both the agents' physical behaviors and the ego robot's internal belief about other agents, thereby explicitly capturing the robot's ability to handle safety-critical events as a function of how accurately it can infer other agents' behavior through observations made during interactions (\autoref{fig:front_fig}).
While \beliefsf~can guarantee safety under assumptions on accurate inference and safety-filter synthesis, how can we certify closed-loop safety in realistic, high-dimensional interaction settings where neither inference nor filter validity is perfect?
Recent work~\cite{lin2024verification} uses \gls{CP} to verify general (non-belief) safety filters learned as neural networks and provide probabilistic safety guarantees based on trajectory rollouts.
However, directly applying \gls{CP} to verify belief-space safety filters can conflate safety failures caused by inaccurate inference with those caused by the safety filter itself (e.g., neural approximation errors), needlessly hurting the filter's \emph{permissiveness}.
For example, when the robot's runtime inference is generally accurate but occasionally fails in specific regions of the state space, standard \gls{CP} can certify a desired safety level only by shrinking the safe set \emph{uniformly}, thus restricting the robot's operation even in regions where safety could in fact be ensured.
This observation suggests that safety verification for belief-space safety filters should explicitly account for the quality of the robot's runtime inference.

\p{Contributions}
We propose a novel inference-aware verification framework for providing high-probability safety guarantees in interactive robot autonomy.
Our key insight is that, rather than directly verifying the filtered system end-to-end, \textit{focusing the filter's verification and deployment on regions where the inference is expected to be reliable can substantially reduce its conservativeness.} 
To the best of our knowledge, this paper is the first to establish closed-loop safety guarantees for interactive motion planning in belief spaces without assumptions on perfect inference or filter validity.
We show that our approach can yield a significantly larger certified safe set and more permissive filter than a standard \gls{CP} baseline through extensive simulation studies of an 18-dimensional interactive driving scenario with a strategically deceptive opponent agent.

\section{Related Work}
\label{sec:related_work}

\p{Safety Filters}
Safety filters monitor a robot's proposed task actions and, when necessary, modify them to avoid upcoming safety failures.
Representative classes of safety filters include, for example, model predictive safety filtering, which reasons about safety by rolling out or optimizing system trajectories~\cite{wabersich2018linear,bastani2021safe,strawn2023conformal,nguyenhsu2024gameplay}, control barrier functions, which smoothly modifies the proposed action via optimization to slow down the system when it approaches the boundary of the safe set~\cite{ames2017cbf,xiao2019hocbf,robey2020learning,oh2025safety}, and reachability-based safety filters, which uses a controlled-invariant set~\cite{blanchini1999set} and the associated safe control policy obtained via computing or approximating the solution to a Hamilton--Jacobi reachability problem~\cite{mitchell2005time,bansal2017hamilton,fisac2019bridging,hsu2021safety,wang2024magics,li2025certifiable}. 
Beyond motion planning, recent work~\cite{bejarano2024safety,oh2025provably} shows that safety filters can help \gls{RL} algorithms converge to an optimal constrained policy that is both safe and performant.
While effective at enforcing safety, conventional safety filtering methods tend to be overly conservative in interactive scenarios, since they reason only in the physical space and do not account for the robot's ability to reduce uncertainty about other agents.
In particular, safety filtering in this case is \textit{open-loop} in the sense that the robot treats its uncertainty over other agents as static and assumes that it will not acquire new information to better understand other agents' behavior.
Alternatively, by proactively reasoning about the \emph{future information} they might receive,
robots can make more informed decisions and become less conservative.
Motivated by this idea, Bajcsy et al.~\cite{bajcsy2021analyzing} formulate a reachability problem in the joint space of physical states and estimated parameters of a human model to analyze how quickly the robot can learn such parameters in order to make safe, timely decisions.
Belief-space safety filters (\beliefsf)~\cite{hu2023deception} extends this idea to fully coupled multi-agent interaction scenarios where the robot is not merely an observer, but has the ability to actively engage with other agents to gain new information, leading to a \emph{closed-loop-information} safety analysis. 
In order to leverage richer and more complex observations, e.g., RGB images, recent works~\cite{nakamura2025generalizing,seo2025uncertainty,lutkus2025latent} develop latent-space safety filters that reason about safety with a learned world model or reduced order model.

We develop, for the first time, a safety verification framework for neural approximations of belief-space safety filters.
Our analysis also suggests a path toward provable guarantees for emerging safety filters that operate in learned latent spaces.

\p{Safety Verification and Conformal Prediction}
While it is generally challenging to synthesize learning-based control policies that are provably safe by design, safety verification has emerged as a practical way to certify the safety of neural controllers prior to their deployment.
Existing safety verification frameworks for autonomous systems predominantly focus on settings with known dynamic models using various control-theoretic tools, \eg, reachable tubes~\cite{hu2020reachsdp,everett2021reachability}, barrier certificates~\cite{prajna2004safety}, hybrid system analysis~\cite{ivanov2019verisig}, and mixed-integer optimization~\cite{ahn2017safety}.
Conformal prediction (CP)~\cite{angelopoulos2023conformal} has become a popular tool for providing safety guarantees for complex, learning-enabled dynamical systems since it is distribution-free and does not require knowledge of the underlying system dynamics.
Lindemann et al.~\cite{lindemann2023conformal} use CP to calibrate prediction regions of other agents' future trajectories, which are incorporated in a downstream model predictive controller for safe planning.
Dixit et al.~\cite{dixit2023adaptive} extends this approach to account for distribution shift using adaptive CP.
More recently, advanced CP variants~\cite{mirzaeedodangeh2025safe,binny2025moved} have been proposed to explicitly handle interactions in multi-agent scenarios. 
Most closely related to our work are CP algorithms for certifying neural HJ-based safe sets and controllers.
Most notably, Lin and Bansal~\cite{lin2024verification} leverage split CP to provide high-probability safety guarantees for systems using neural HJ safe controllers.
They also show that such a CP formulation reduces to a scenario optimization problem.

While Lin and Bansal's algorithm focuses on single-system safety in the physical space, our work builds on it to certify neural safety filters that operate in the joint physical--belief space for interactive robotics under uncertainty.
We show that focusing verification on regions where inference is expected to be reliable can yield a substantially more permissive belief-space safety filter than an inference-agnostic CP baseline.

\section{Problem Setup and Background}
\label{sec:problem_setup}

\p{Problem Setup} Consider an ``ego'' autonomous robot ($\ego$) interacting with one or multiple ``opponent'' agents ($\oppo$), described by a dynamical system
\begin{equation}
\label{eq:dyn_sys}
    \state_{t+1} = \dyn (\state_t, \ctrl^\ego_t, \ctrl^\oppo_t),
\end{equation}
where $\state_t = (\state^\ego_t, \state^\oppo_t) \in \statespace\subseteq\reals^{\nx}$ is the joint state vector of the ego and the opponent, $\ctrl^\ego_t \in \cset^\ego \subset \reals^{\me}$ and $\ctrl^\oppo_t \in \cset^\oppo \subset \reals^{\mo}$ are the control vectors of the robot and the opponent, respectively, and
$f: \statespace \times \cset^\ego \times \cset^\oppo \to \statespace$ describes the physical interactions.
We capture catastrophic safety events (\eg, collisions) with the \emph{failure set}:
\begin{equation}
\failureset\coloneq\{\state\in\xset\mid g(\state)<0\},
\end{equation}
where $g:\xset\rightarrow\reals$ is a safety \textit{margin} function quantifying how close state $\state$ is to violating safety.

In this paper, we are interested in verifying that a learned \emph{safe set} $\safeset$ (assuming $\safeset \cap \failureset = \emptyset$) and a corresponding control policy can guarantee safe interactions between the ego robot and the opponent with high probability within a finite horizon $T$, if the system starts within $\safeset$: 
\begin{equation}
    \label{eq:overall_safety}
    \prob_{\state_0 \in \safeset} (\state_t \in \failureset) \leq \epsilon,~\forall t = 0, 1, \ldots, T.
\end{equation}

\begin{remark}
    Alternatively, one may also verify a stronger argument than~\eqref{eq:overall_safety} that safe set $\safeset$ is controlled-invariant (\ie, the state $\state$ does not leave $\Omega$ under a specific control policy): $\prob_{\state_0 \in \safeset} (\state_t \in \safeset^\compl) \leq \epsilon,~\forall t = 0, 1, \ldots, T$.
\end{remark}

\p{Safety Filters, Maximal Safe Set, and Hamilton--Jacobi Reachability}
In order to achieve safety objective~\eqref{eq:overall_safety} while minimally interfering with the ego robot's task performance, we consider \textit{safety filter}, an automatic process that monitors the operation of the ego robot at runtime and intervenes, when deemed necessary, by modifying its task policy $\policyTask(\state_t)$ with a filtered action $\ctrl_t = \safetyFilter(\state_t, \policyTask)$ to prevent a potential safety failure.
Here, the function form of $\safetyFilter$ depends on the intervention type of a safety filter, which includes, \eg, switching, transition, and optimization; our method is designed to work for generic value-based safety filters~\cite[Sec.~3.2]{hsu2023safety} and therefore agnostic to the specific type of $\safetyFilter$ as long as a safety value is available for the filter's monitoring.

Ideally, we want the most \emph{permissive} safety filter that works within the \emph{maximal} safe set $\safeSetMax$, which contains all states in $\statespace$ where recursive safety is guaranteed despite the worst-case opponent action.
In theory, such $\safeSetMax$ and the corresponding safe controller can be computed with game-theoretic \gls{HJ} reachability analysis~\cite{mitchell2005time,hsunguyen2023isaacs,wang2024magics}.
HJ analysis captures the opponent's worst-case behavior with an infinite-horizon zero-sum dynamic game.
In discrete time, the game's solution can be obtained via the fixed-point Isaacs equation~\cite{isaacs1954differential} (the game-theoretic counterpart to the Bellman equation):
\begin{equation}
\label{eq:Isaacs_basic}
\begin{aligned}
    \valfunc(\state) = \min \left\{ g(\state), \max_{\ctrl^\ego \in \cset^\ego} \min_{\ctrl^\oppo \in \cset^\oppo} \valfunc \big(\dyn (\state, \ctrl^\ego, \ctrl^\oppo)\big)  \right\}.
\end{aligned}
\end{equation}
The \textit{safety value function} $\valfunc(\cdot)$ encodes the maximal safe set with its zero superlevel set, \ie, $\safeSetMax := \{\state \mid \valfunc(\state) \geq 0\}$.
Given a value function $\valfunc(\cdot)$, the optimal ego's defense and opponent's attack policies $(\policy^{\shield}, \policy^{\oppo,*})$ are obtained by taking argmax/argmin controls in~\eqref{eq:Isaacs_basic}.
Existing work has proposed several ways of using \gls{HJ} policies in safety filter $\safetyFilter$, including switching~\cite{mitchell2005time}, rollouts~\cite{nguyenhsu2024gameplay}, and smooth blending~\cite{borquez2024safety,oh2025safety}, each leading to a different intervention type and practical tradeoff between safe set volume $\safeset$ and the robot's task performance.

Despite its theoretical soundness, traditional \gls{HJ} analysis relies on grid-based dynamic programming, making it intractable to scale to high-dimensional state spaces.
Recent work has used data-driven methods to scale up the computation of \gls{HJ}-based safety filter synthesis by approximating value function $\valfunc$ or safe control policy $\policy^{\shield}$ with neural networks learned through, for example, deep reinforcement learning~\cite{fisac2019bridging,hsunguyen2023isaacs,wang2024magics,li2025certifiable} and self-supervised learning~\cite{bansal2021deepreach}.
Although such \textit{neural safety filters} have shown practical scalability and robustness in safe control for complex robotic systems, they do not come with any safety guarantees on the closed-loop system.
Recently, Lin and Bansal~\cite{lin2024verification} have developed a verification algorithm to certify high-probability safety guarantees~\eqref{eq:overall_safety} for neural \gls{HJ}-based safety filters using conformal prediction~\cite{angelopoulos2023conformal}, which we briefly introduce in the next section.

\p{Verifying Neural Safety Filters with Conformal Prediction}
For a \emph{single-agent} system $\ego$ (\ie, in the absence of opponent $\oppo$) equipped with a neural safety value function $\valfunc$ and control policy $\policy^\shield$, Lin and Bansal~\cite{lin2024verification} seek to verify that recursive safety holds with high probability for $\safeset_\delta := \{\state \mid \valfunc(\state) \geq \delta\}$, the $\delta$-superlevel set of $\valfunc$, within a time horizon $[0,T]$.
Given a robot trajectory $\traj_{\state_0} = (\state_0, \state_1, \ldots, \state_T)$ resulted from applying policy $\policy$, define its \emph{empirical safety score} as:
\begin{equation}
    \label{eq:em_safety_core}
    \cost_{\policy} (\traj_{\state_0}) =  \min_{t\in[0,T]} g(\state_t), 
\end{equation}
which is non-negative if and only if the ego robot always stays outside failure set $\failureset$ for all $t \in [0,T]$.
The verification proceeds as follows.
First, $N$ independent and identically distributed (i.i.d.) initial states $\state^{1:N}_0$ are sampled within $\safeset_\delta$ according to some probability distribution $\prob$ over $\safeset_\delta$.
Since $\safeset_\delta$ typically does not have a closed form, rejection sampling is commonly used in this step.
Next, safety scores $\cost_{\tpolicy^\shield}(\traj_{\state_i})$ are computed for $i = 1,2,\ldots,N$ by forward simulating the system from $\state^i_0$ under $\tpolicy^\shield$.
Finally, we count $k$ empirically unsafe trajectories, where $J_{\tpolicy^\shield} < 0$.
The following proposition provides a probabilistic safety guarantee of the closed-loop system operating within $\safeset_\delta$.
\begin{proposition}[Lemma 5~\cite{lin2024verification}]
\label{prop:lin_cp}
    If a safety coverage (violation) parameter $\epsilon \in (0,1)$ and a confidence parameter $\beta \in (0,1)$ satisfy
    $
    \sum_{i=0}^{k} {\binom{N}{i}} \,\epsilon^{\,i} (1-\epsilon)^{\,N-i} \le \beta
    $,
    then, with probability at least $1-\beta$,
    $\prob_{\state \in \safeSet_\delta}\!\left(\cost_{\tpolicy^\shield}(\traj_\state) \le 0\right) \le \epsilon$.
\end{proposition}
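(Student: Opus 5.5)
We treat this as a one-sided binomial tail bound on the unknown failure probability; it is essentially the Clopper--Pearson argument. Let $p := \prob_{\state \in \safeSet_\delta}\!\left(\cost_{\tpolicy^\shield}(\traj_\state) \le 0\right)$ be the true failure probability under the sampling distribution. The dynamics and the policy $\tpolicy^\shield$ are deterministic given $\state_0$, so each rollout yields an indicator $Z_i := \mathbf{1}\{\cost_{\tpolicy^\shield}(\traj_{\state^i_0}) \le 0\}$. Because the $\state^i_0$ are i.i.d., the $Z_i$ are i.i.d.\ Bernoulli$(p)$. Hence the number of empirical failures $K = \sum_{i=1}^N Z_i$ is Binomial$(N,p)$. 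If the empirical count uses strict inequality $\cost<0$ while the statement uses $\le 0$, we either define $k$ with $\le 0$ or note that ties have measure zero. We keep the non-strict version so that the event in the statement matches the counted event.

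We then say the verification \emph{accepts} $\epsilon$ when $F_\epsilon(K) := \sum_{i=0}^{K}\binom{N}{i}\epsilon^i(1-\epsilon)^{N-i} \le \beta$. The claim to prove is $\Pr(\text{accept } \epsilon \text{ and } p > \epsilon) \le \beta$, with the probability taken over the $N$ samples. Since $\epsilon$ and $\beta$ are fixed before sampling, the bad event is nonempty only when $p>\epsilon$. So it suffices to show that, for any fixed $p>\epsilon$, $\Pr_{K\sim \mathrm{Bin}(N,p)}(F_\epsilon(K)\le\beta)\le\beta$.

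The key step uses two monotonicity facts. First, $F_\epsilon(k)$ is nondecreasing in $k$, so the acceptance region $\{k : F_\epsilon(k)\le \beta\}$ is a lower interval $\{0,\dots,k^*\}$, where $k^*$ is the largest such $k$. If no $k$ qualifies, the probability is zero and we are done. Second, the binomial CDF $F_p(k^*)=\Pr_{\mathrm{Bin}(N,p)}(K\le k^*)$ is nonincreasing in $p$, by the standard stochastic dominance (coupling) argument: $K$ under a larger success probability dominates $K$ under a smaller one. Combining the two gives
\[
\Pr_{\mathrm{Bin}(N,p)}(K\le k^*) \le \Pr_{\mathrm{Bin}(N,\epsilon)}(K\le k^*) = F_\epsilon(k^*) \le \beta .
\]
Therefore, with probability at least $1-\beta$, either acceptance fails or $p\le\epsilon$. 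Since the proposition is stated conditionally on the inequality holding for the observed $k$, this yields the guarantee.

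The main obstacle is one of interpretation rather than calculation. The statement conditions on an observed $k$, so we must phrase the result as a property of the randomized procedure, not of a particular realized $k$, to avoid conditioning on the data. We would handle this exactly as above, by bounding the joint event over the sampling randomness. We would also remark that this coincides with the split conformal argument in~\cite{lin2024verification}, since that argument takes the $\lceil\cdot\rceil$-quantile of the scores.
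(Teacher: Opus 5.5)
Your proof is correct, and it takes a different route from the paper. The paper does not reprove this proposition; it defers to Lin and Bansal. Its proof of Theorem~\ref{thm:jist}, which is the same statement on $\Omega_{\delta|h}$, goes through split conformal prediction:
\begin{itemize}
\item it takes $-J$ as the nonconformity score and the $(N-k)$-th order statistic as the quantile;
\item it invokes the fact that the calibration-conditional coverage is distributed as $\operatorname{Beta}(N-k,k+1)$;
\item it converts the Beta CDF at $1-\epsilon$ into the binomial tail $\sum_{i=0}^{k}\binom{N}{i}\epsilon^i(1-\epsilon)^{N-i}$.
\end{itemize}
You instead work directly with the failure indicators, note $K\sim\mathrm{Bin}(N,p)$, and invert a one-sided binomial test (Clopper--Pearson) using monotonicity of the binomial CDF in $p$. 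The two bounds coincide numerically, since $F_{\operatorname{Beta}(N-k,k+1)}(1-\epsilon)=\Pr_{\mathrm{Bin}(N,\epsilon)}(K\le k)$.

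Your route is more elementary and in places more rigorous:
\begin{itemize}
\item It needs no no-ties assumption on the scores. The Beta law for conformal coverage is exact only for a.s.\ distinct scores, so ties would need an extra dominance argument.
\item It handles the data-dependence of $k$ correctly. The paper's sketch plugs the observed $k$ into a Beta law that holds only for a quantile index fixed before sampling. The clean repair is exactly what you do: fix the acceptance threshold $k^\star$ in advance and bound the joint event $\{K\le k^\star,\ p>\epsilon\}$.
\item Because your bound holds for every fixed $\beta$, it also shows that $F_\epsilon(K)$ is a valid p-value. This justifies Algorithm~\ref{alg:offline_cp}'s practice of computing $\beta$ after observing $k$.
\end{itemize}

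Counting $J\le 0$ is the right call, since the paper counts $J<0$ but certifies $\Pr(J\le 0)$. Drop the alternative that ties have measure zero: nothing in the setup implies it.

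What the conformal route buys in exchange:
\begin{itemize}
\item the full coverage distribution for any fixed score threshold, so one could certify a margin $J\ge -\hat q$ rather than only $J>0$;
\item the link to scenario optimization in Lin and Bansal;
\item a more direct path to the adaptive-CP extensions the paper mentions for interaction-induced distribution shift.
\end{itemize}
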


The proof of \autoref{prop:lin_cp} is based on standard split \gls{CP}~\cite{angelopoulos2023conformal} and can be found in~\cite{lin2024verification}.
In this paper, we extend and apply Lin and Bansal's method to the verification of a recently proposed belief-space safety filter~\cite{hu2023deception} that safeguards against a potentially adversarial opponent while adapting to uncertainty during the interaction. We introduce this safety filter in the next section.

\p{\beliefsf: Reducing Conservativeness by Filtering in Information Space}
Standard neural approximations to the two-player safety value function in~\eqref{eq:Isaacs_basic}, \eg, the ISAACS framework~\cite{hsunguyen2023isaacs}, assume that the opponent may take any action from its control set $\cset^\oppo$ at all times.
This assumption can lead to overly conservative solutions, such as excessively defensive robot behaviors that prevent efficient task execution, or a learned safe set $\safeset$ whose volume is significantly smaller compared to that of $\safeset^*$.

Belief-Space Safety Filter (\beliefsf)~\cite{hu2023deception} proposes to make the filtering \textit{more permissive} by conditioning the opponent's control bound on the robot's inference (runtime learning) algorithm,
which reduces the robot's uncertainty about the opponent at runtime.
For instance, in the vehicle--pedestrian interaction scenario described in the running example and visualized in \autoref{fig:re}, imagine that when the human begins to step onto the highway by crossing the road boundary, the robot will have made the corresponding observation, which shifts its belief towards the hypothesis that the human's true intent is to cross the road.
Assuming that the robot's belief about human intent is indeed accurate (we will revisit this assumption shortly), the safety filter can temporarily discard other low-probability hypotheses about the human's intent and subsequently ``shrink'' the opponent's control bound during the safety analysis, thus reducing the conservativeness of the filter.

\begin{figure}[!hbtp]
  \centering
  \includegraphics[width=1.0\columnwidth]{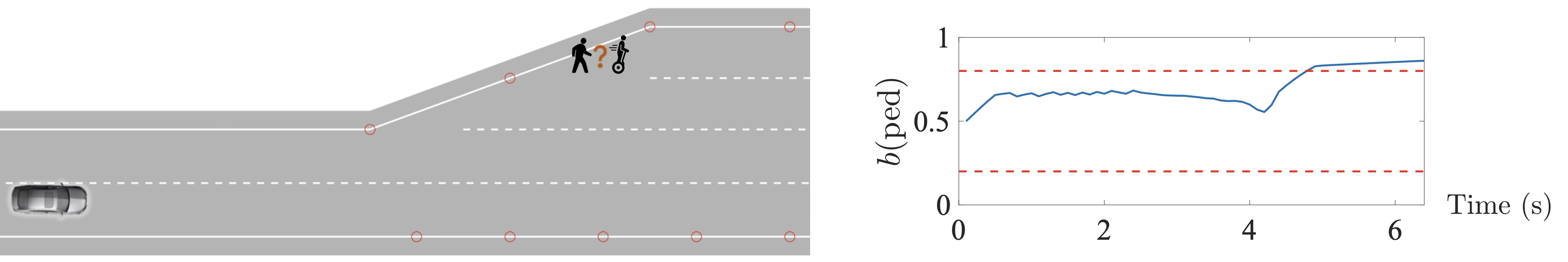}
  \caption{\label{fig:re} \beliefsf~applied to vehicle--pedestrian interaction (running example). \emph{Left:} The interaction scene, where an ego autonomous vehicle is uncertain about the semantic class (pedestrian or Segway rider) and intended destination (red dots) of an opponent human crossing the road.
  \emph{Right:} A representative closed-loop belief trajectory of $\belstate(\theta_\type=\ped)$, where the dashed red lines represent threshold $\epsilon_{\theta_\type} = 0.2$.
  The adversarial opponent learned to exploit the ego robot by generating motions that keep the robot's belief close to uniform, thereby confusing its inference module and making it challenging to timely identify the opponent's hidden type $\theta$ and the corresponding control bound $\cset^\oppo_\theta(\state_t)$.
  }
\end{figure}

Concretely, in \beliefsf, we model the opponent's \emph{hidden type}~\cite{kajii1997robustness} with a discrete variable $\theta\in\Theta$, which may encode its individual characteristics that are unknown to the ego, \eg, its intended destination, semantic class, and willingness to cooperate, etc.
The ego robot then represents its uncertainty over the unknown type $\theta$ through a \textit{belief} state $\belstate(\theta) \in \belspace$.
Using an inference (runtime learning) algorithm, \eg, Bayesian inference and neural motion predictors, the robot can update (and hopefully refine) its belief over time as new observations are made. 
We can combine physical dynamics~\eqref{eq:dyn_sys} and this runtime inference process to define a joint dynamical system:
\begin{equation}
\label{eq:jnt_dyn}
\jstate_{t+1} = \jntdyn(\jstate_t, \ctrl^\ego_{t}, \ctrl^\oppo_{t}) :=
\begin{bmatrix}
\dyn (\state_t, \ctrl^\ego_t, \ctrl^\oppo_t) \\[0.1cm]
\beldyn({\belstate}_{t}, \obs_t)
\end{bmatrix},
\end{equation}
where $\jstate_t := (\state_t, \belstate_t) \in \jsset$ is an \emph{information} (\ie, joint physical--belief) state vector, $\obs_t := \sigma (\state_{t}, \ctrl^\oppo_{t}) \in \obsspace$ is the robot's current observation with an observation map $\sigma$ modeling partial observability, and $\beldyn: \belspace \times \obsspace \rightarrow \belspace$ is the dynamic model of the robot's runtime inference process.
Joint dynamics~\eqref{eq:jnt_dyn} open up the opportunities of safety analysis and filtering in \textit{information space}. 
We can now extend the standard Isaacs equation~\eqref{eq:Isaacs_basic} to the information state $\jstate$:
\begin{equation}
\label{eq:Isaacs_belief}
\begin{aligned}
    \valfunc(\jstate) = \min \left\{ g(\state),  \max_{\vphantom{\csetest}\ctrl^\ego \in \cset^\ego} \min_{\ctrl^\oppo \in {\csetest}^\oppo(\jstate)} \valfunc \left(\jntdyn(\jstate, \ctrl^\ego, \ctrl^\oppo)\right)  \right\}.
\end{aligned}
\end{equation}
Here, ${\csetest}^\oppo(\jstate)$ is the ego's \emph{inferred control bound} of the opponent---a key identity that bridges runtime inference and closed-loop safety---defined as
\begin{equation}
\label{eq:pred_ctrl_bd}
    \csetest^\oppo (\jstate_t) := 
    \bigcup_{\theta \in \Theta} \csetest^{\oppo}_\theta (\jstate_t), \quad 
    \csetest^{\oppo}_\theta (\jstate_t) :=
    \begin{cases}
    \cset^\oppo_\theta(\state_t),  & \text{if } \belstate_t(\theta) \geq \epsilon_\theta \\
    \emptyset,  & \text{if } \belstate_t(\theta) < \epsilon_\theta
    \end{cases} 
\end{equation}
where $\cset^\oppo_\theta(\state_t)$ is a known (possibly state-dependent) set of \emph{admissible} opponent actions associated with hypothesized type~$\theta$, and $\epsilon_\theta \geq 0$ is a designer-specified threshold.
Below, we use a running example to illustrate that \beliefsf~improves practical safety and performance compared to neural safety filters synthesized only in physical space.

\begin{runningexample}
\emph{Running Example (Vehicle--Pedestrian Interaction).}

\p{System} We consider an 18D system from~\cite{hu2023deception} describing an uncertain vehicle--pedestrian interaction scenario, as depicted in \autoref{fig:re}.
The autonomous vehicle (ego $\ego$) and crossing human (opponent $\oppo$) are modeled as a 5D kinematic bicycle model and a 2D particle with velocity control, respectively.
The information state vector is structured as $\jstate := (\state^\ego, \state^\oppo, \belstate(\theta_\type),\belstate(\theta_\goal)) \in \reals^{18}$, where $\theta_\type \in \{\ped,\seg\}$ models the ego's hypotheses about the opponent's semantic class (pedestrian or Segway rider), with associated control sets 
$\cset^\oppo_\ped = 
[-0.75,0.75]
\times
[-2,2]\ms$ and $\cset^\oppo_\seg = [-0.75,0.75]
\times
[-8,0]\ms$, and $\theta_\goal \in \{g_i\}_{i=1}^{9}$ models the hypotheses about the human's goal position, scattered along the road boundary (red dots in \autoref{fig:re}), with associated control set $\cset^\oppo_{g_i}$ defined as a box centered around a goal-reaching nominal control action for $g_i$.
The robot uses Bayesian inference as the learning dynamics $\beldyn$: $\belstate_{t+1}(\theta) \propto \prob(\ctrl^\oppo_t \mid \state_t; \theta)\belstate_t(\theta)$. The likelihood function is a Gaussian $\prob(\ctrl^\oppo_t \mid \state_t; \theta) = \mathcal{N}(\mu_\theta, \Sigma_\theta)$, where $\mu_\theta$ is the average control of the bound $\cset^\oppo_\theta(\state_t)$ and $\Sigma_\theta$ is a predefined covariance matrix.

\p{Safety Specifications}
A trajectory is considered safe if (1) no collision occurs between the ego vehicle and the human, and (2) the ego vehicle remains within the road boundary.
We define the failure set $\failureset$ using a safety margin function $g(\state)$ that encodes both collision avoidance and road-boundary constraints, where $g(\state) < 0$ indicates that at least one safety constraint is violated.

\p{\beliefsf~Synthesis and Deployment}
We learned a \beliefsf~$\safetyFilter$ using a game-theoretic adversarial 
\gls{RL} approach~\cite{hsunguyen2023isaacs,wang2024magics} that approximately solves a time-discounted variant of Isaacs equation~\eqref{eq:Isaacs_belief}.
The result includes a neural value function $\valfunc$ and safety fallback policy $\policy^\shield$.
The training details can be found in~\cite{hu2023deception}.
At runtime, we deploy $\safetyFilter$ as a value-based, switch-type~\cite[Sec.~3.1]{hsu2023safety} safety filter:
\begin{equation*}
    \ctrl^\ego = \safetyFilter \left(\jstate, \policyTask(\jstate) \right) := %
    \begin{cases}
         \policyTask(\jstate), \quad & \valfunc \Big( \jntdyn \left( \jstate, \policyTask(\jstate), \ctrl^\oppo \right) \Big) \geq \delta, \\
         \policy^\shield(\jstate), & \text{otherwise},
    \end{cases}
\end{equation*}
where the safety fallback policy only overrides the task policy to maintain non-negativity (hence safety) of $\valfunc(\cdot)$ when state $\jstate$ approaches the boundary of the learned safe set $\safeset_\delta$.

\p{Results}
We ran 1000 randomized trials with the opponent applying an adversarial policy, also synthesized with game-theoretic RL.
The \beliefsf~with $\delta=0$ achieved an empirical safe rate of $94.3~\%$ and a task completion rate of $99.8~\%$ (a trial is considered timeout if $t>10$ s).
In contrast, the baseline safety filter that reasons only in the physical state space, synthesized with the same game-theoretic RL algorithm, yielded an empirical safe rate of $77.7~\%$ and a task completion rate of only $37.6~\%$.
\textit{These results demonstrate that \beliefsf~is significantly safer and more performant} in practice than standard neural \gls{HJ}-based safety filters operating purely in physical spaces.
It is also worth noting that, beyond physical interactions, the opponent in this example learns to exploit the ego agent in the belief space through \emph{deceptive behaviors} that hinder the ego's inference by keeping its belief close to uniform, as evidenced by the belief trajectory in \autoref{fig:re}.
This observation underscores the importance of providing formal safety guarantees for \beliefsf~in scenarios involving strategic, potentially adversarial opponents that may deliberately exploit the ego robot's runtime inference.
\end{runningexample}

Now, one may ask under what conditions a \beliefsf~$\safetyFilter$ can \emph{guarantee} safety for system~\eqref{eq:dyn_sys}.
It has been shown in~\cite{hu2023deception} that closed-loop safety can be ensured under the \emph{Inference Hypothesis} (\autoref{def:IH}), that is, given an optimal value function $\valfunc$ according to \eqref{eq:Isaacs_belief}, if the inferred opponent control bound $\csetest^\oppo (\jstate_t)$ predicted by $\beldyn$ can always include the opponent's executed control action $\ctrl^\oppo_t$, then recursive safety is guaranteed when the ego applies $\beliefsf$ against \emph{any} opponent policy.
However, such assumption may be too strong to hold in practice for two reasons: (1) due to the high dimensionality of information spaces\footnote{The joint system $\jntdyn$ in the running example is 18D; grid-based \gls{HJ} analysis cannot practically scale beyond 7D.}, the value function $\valfunc$ and associated safe policy $\policy^\shield$ need to be approximated using, \eg, neural networks, which inevitably introduces error, and (2) the opponent action is likely outside the inferred bound $\csetest^\oppo (\jstate_t)$ due to, \eg, delayed inference or poorly designed parameters ($\cset^\oppo_\theta(\state_t)$ and $\epsilon_\theta$ in \eqref{eq:pred_ctrl_bd}), thus invalidating the Inference Hypothesis.
In this paper, we develop a practical algorithm to guarantee probabilistic safety for \beliefsf~without relying on the Inference Hypothesis.

\begin{definition}[Inference Hypothesis~\cite{hu2023deception}]
\label{def:IH}
    At each time $t$, the ego's belief $\belstate_t$ must have assigned no less than $\epsilon_\theta$ probability to at least one $\theta$ consistent with opponent action $\ctrl^\oppo_t$. This is equivalent to requiring $\ctrl^\oppo_t \in \csetest^\oppo (\jstate_t),~\forall t \geq 0$, with $\csetest^\oppo (\jstate_t)$ defined in~\eqref{eq:pred_ctrl_bd}.
\end{definition}

\section{Permissive and Assured Belief-Space Safety Filtering}
\label{sec:approach}

\p{Problem Statement: Permissive Safety Verification of \beliefsf}
In this paper, our main objective is to develop a verification algorithm for \beliefsf, to certify that its safety guarantees hold with high probability within a learned belief-space safe set.
We further use insights gained from verification to modify the deployment of belief-space safety filters.
In particular, we show that verifying and deploying \beliefsf~in the \emph{trusted inference region}---a subset of the learned safe set where the robot's inference algorithm is expected to perform reliably---is key to reducing the practical conservativeness of belief-space safety filtering.

\subsection{Conformal Probabilistic Safety Verification for \beliefsf}
\label{sec:approach:direct_cp}

We start by considering a more direct extension of Lin and Bansal's verification algorithm~\cite{lin2024verification} to the regime of belief-space safety.
Define $\distr$ as an unknown distribution over the trajectories of information states $\jtraj = (\jstate_0, \jstate_1, \ldots, \jstate_T)$.
Next, we make a standard assumption in conformal prediction for safe multi-agent interaction, \eg, MPC-based social navigation~\cite{lindemann2023conformal}.

\begin{assumption}
\label{assump:ds}
    The control inputs of the ego robot and opponent $(\ctrl^\ego_t,\ctrl^\oppo_t)$ do not change the distribution $\distr$ for all $t \in [0,T]$.
\end{assumption}

\begin{remark}
Assumption~\ref{assump:ds} is a standard assumption for ensuring exchangeability commonly used in conformal prediction for interactive systems~\cite{lindemann2023conformal}.
Intuitively, it assumes that applying the safety filter does not alter the underlying distribution of interaction trajectories.
When the robot's filtered actions substantially influence the opponent's behavior and alter the interaction distribution, this assumption may no longer hold.
In such cases, recent CP methods that explicitly account for interaction-induced distribution shift~\cite{mirzaeedodangeh2025safe,binny2025moved} provide a promising direction for extending our framework.
\end{remark}

Given an ego robot's task policy $\policy^{\task,\ego}$, opponent's policy $\policy^\oppo$, and a \beliefsf~$\safetyFilter$ defined by neural value function $\valfunc$ and safe control policy $\policy^\shield$, our goal is to verify that within the $\delta$-superlevel set $\safeset_\delta := \{\jstate \mid \valfunc(\jstate) \geq \delta\}$,
system~\eqref{eq:jnt_dyn} in closed-loop with $\safetyFilter$ is recursively safe, \ie, $\state_t \notin \failureset,~\forall t \in [0,T]$, with high probability. 
The approach, which we refer to as \directcp, resembles that of Lin and Bansal~\cite{lin2024verification}, with the main difference that the sampling is now performed in the information space that encompasses both the ego and opponent's physical states as well as belief states.
We first sample $N$ i.i.d. initial states $\jstate^{1:N}_0 \in \safeset_\delta$ and obtain trajectories $\jtraj_{z^{1:N}}$ by forward simulating joint system $\jntdyn$ with the ego applying $\safetyFilter(\cdot, \policy^{\task,\ego})$ and the opponent applying policy $\policy^\oppo$.
Then, we compute empirical safety scores $\cost_{\safetyFilter}(\jtraj_{\jstate_i}) = \min_{t\in[0,T]} g(\state_t)$ for $i = 1,2,\ldots,N$ and count $k$ empirically unsafe trajectories where $J_{\safetyFilter}(\cdot) < 0$.

\begin{runningexample}
\emph{Running Example:} We applied \directcp~to verify \beliefsf~in the running example, where the opponent applies a fixed adversarial policy synthesized with game-theoretic RL.
The verification horizon was $T = 10$ s.
We used a confidence parameter $\beta = 10^{-5}$. 
With a safe set level $\delta = 0.1$, we obtained $k=438$ failure cases from $N=20000$ samples, leading to a certified safety level (coverage) $1-\epsilon = 97.34\%$.
Among the $20000$ trials, inference failed in $475$ cases, $338$ of which also resulted in unsafe outcomes.
The safety level is increased to $98.7\%$ with $\delta = 0.7$.
Additional results with varying safe set levels $\delta$ can be found in \autoref{fig:sr}.
\end{runningexample}

As reported in the above running example, the empirical safe rate when using \beliefsf~is significantly lower when the robot's inference algorithm has failed, \ie, when it does not account for the opponent's executed action $\ctrl^\oppo_t$ in the inferred bound $\csetest^\oppo (\jstate_t)$.
This phenomenon is expected since the synthesis of \beliefsf~via Isaacs equation~\eqref{eq:Isaacs_belief} assumes that the robot's inference is accurate and that the opponent's control always lies within the predicted control bound.
Motivated by this observation, in the next section, we propose an improved \gls{CP} procedure by explicitly accounting for the robot's inference performance in belief-space safety filtering, so that the verification algorithm can return a tighter (\ie, smaller) safety coverage parameter $\epsilon$ without needing to shrink the safe set volume with an excessively large $\delta$, thus reducing the conservativeness of safety filtering.

\begin{remark}
    Although the opponent policy used in verification is an approximation of a worst-case adversarial policy, the safety guarantees provided by conformal prediction are \emph{distributional}: they hold with respect to the trajectory distribution induced by this fixed opponent policy, rather than a worst-case guarantee over all admissible opponents.
    Nevertheless, verifying against a fixed, highly exploiting opponent remains useful in practice.
    Empirically, we often observe that such opponents tend to induce safety-critical interactions that stress both the safety filter and the runtime inference module~\cite{hu2023deception,wang2024magics,nguyenhsu2024gameplay}.
    When the ego policy is paired with a less aggressive opponent, the resulting trajectory is typically no more safety-critical than those seen during verification, so the empirical failure rate under the same ego policy is expected to be no higher (and often lower) than what is observed during verification.
\end{remark}

\subsection{Verifying and Deploying \beliefsf~in Trusted Inference Regions}

Our key insight is that, when the robot's inference algorithm is sufficiently reliable, \ie, it can successfully contain $\ctrl^\oppo_t$ within $\csetest^\oppo (\jstate_t)$ for the majority of state space with high probability, we may choose to identify and ``carve out'' the small subset in state space where the inference algorithm can fail. This way, we can still expect to recover a large portion of the safe set.
To formalize this idea, we start by introducing the \textit{inference score} $\costinf = h_L(\jstate)$, where $h_L: \jsset \rightarrow \reals$ is a user-specified function mapping an information state $\jstate$ to a score $\costinf$ that accesses the \textit{quality} of the robot's inference algorithm; the inference algorithm is \textit{believed} to be accurate for physical state $\state$ in $\jstate$ if $h_L(\jstate) \geq 0$.
In the context of \beliefsf, a reasonable design of the inference score function can be as simple as a classifier that outputs $h_L(\jstate) = 1$ if it \textit{predicts} for the information state trajectory with $\jstate_0 = \jstate$ that $\ctrl^\oppo_t \in \csetest^\oppo (\jstate_t),~\forall t \geq 0$, and $h_L(\jstate) = -1$ otherwise; alternatively, $h_L$ can be a function of the predicted probability or confidence score\footnote{For example, if the model's predicted probability or confidence of belief trajectory $\beltraj=(\belstate_0,\belstate_1,\ldots)$ is too low, $h_L$ can return a negative number to predict poor inference performance.}, typically available (or easy to calculate) from the output of modern multi-modal trajectory prediction models~\cite{shi2022mtr}.
Note that the inference score function $h_L$ is purely an evaluation on the quality of the inference algorithm $\beldyn$, and should therefore be chosen \textit{independently} of the safety filter $\safetyFilter$.
We may now define the notion of \emph{trusted inference region}, a subset of the learned safe set where we also expect the robot's inference algorithm to be accurate.

\begin{definition}[Trusted Inference Region]
\label{def:TIR}
    Given a score function $h_L$ and learned information-space safe set $\safeset_\delta$, the trusted inference region is defined as $\safeset_{\delta | h} := \{\jstate \mid \valfunc(\jstate) \geq \delta, h_L(\jstate) \geq 0 \}$.
\end{definition}

\begin{runningexample}
\emph{Running Example:} We trained a classifier $h_L(\jstate)$ that outputs $1$ if it predicts that the information state trajectory satisfies $\ctrl^\oppo_t \in \csetest^\oppo (\jstate_t),~\forall t \in [0,T]$, and $-1$ otherwise.
The classifier was trained on a dataset of $10^6$ randomly sampled initial information states $\jstate_0$ with labels obtained by rolling out joint dynamics $\jntdyn$.
On a test dataset of size $2 \times 10 ^5$, $h_L(\jstate)$ leads to a false inference rate (\ie, rejection rate) of $4.67\%$ with a misclassification rate of $0.92 \%$.
\end{runningexample}

Now, we are ready to introduce Joint Inference--Safety Test (\jist), an inference-aware \gls{CP}-based safety verification algorithm tailored for \beliefsf.
Our goal is to verify that system trajectories starting within $\safeset_{\delta | h}$ under \beliefsf~$\safetyFilter$ is safe with high probability.
Similar to \directcp, we begin by sampling $N$ i.i.d. initial states $\jstate^{1:N}_0$ within the trusted inference region $\safeset_{\delta | h}$.
We then forward-simulate $\jntdyn$ under $\safetyFilter$ and $\policy^\oppo$ to obtain trajectories $\jtraj_{z^{1:N}}$.
Since, ultimately, we care about safety assurances under $\safetyFilter$,
we only need to compute the empirical safety score $\cost_{\safetyFilter}(\jtraj_{\jstate_i}) = \min_{t\in[0,T]} g(\state_t)$ for $i = 1,2,\ldots,N$ and count $k$ failure cases, the same as in \directcp.
The above \jist~verification procedure is summarized in Algorithm~\ref{alg:offline_cp}.
Typically, we want the verification result to hold with high confidence defined by a target value $\bar{\beta}$ that is sufficiently small.
In the event that $\beta > \bar{\beta}$, one may rerun Algorithm~\ref{alg:offline_cp} by increasing $\epsilon$, $\delta$, or $N$, thereby achieving the desired confidence at the expense of increased safety conservativeness or sample complexity.
Compared to \directcp, \jist~does not introduce any additional sample or computation complexity, since it only differs by focusing sampling in the trusted inference region $\safeset_{\delta | h}$.
In Section~\ref{sec:case_study}, we will see that this minimal change can lead to a significantly reduced conservativeness for belief-space safety filtering in practice.
The following theorem provides a probabilistic safety guarantee of the closed-loop system $\jntdyn$ operating within the trusted inference region $\safeset_{\delta | h}$.

\begin{theorem}[Safety Verification of \beliefsf~with \jist]
\label{thm:jist}
    Given a trusted inference region $\safeset_{\delta | h}$ defined in \autoref{def:TIR}, if a safety coverage parameter $\epsilon \in (0,1)$ and a confidence parameter $\beta \in (0,1)$ satisfy
    $
    \sum_{i=0}^{k} {\binom{N}{i}} \,\epsilon^{\,i} (1-\epsilon)^{\,N-i} \le \beta
    $,
    then 
    $\prob_{\jstate \in \safeSet_{\delta|h}}\!\left(\cost_{\safetyFilter}(\jtraj_\jstate) \le 0\right) \le \epsilon$ with probability at least $1-\beta$.
\end{theorem}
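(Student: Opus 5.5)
The plan is to reduce Theorem~\ref{thm:jist} to \autoref{prop:lin_cp}, or equivalently to the underlying split conformal / binomial tail argument. The key observation is that \jist~differs from \directcp~only in the sampling region. In \jist, initial states are drawn i.i.d. from a distribution supported on $\safeset_{\delta|h}$ instead of $\safeset_\delta$. Everything else in the argument depends only on the random variables being i.i.d.; the particular set they are drawn from plays no role.

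\emph{Step 1: the per-sample failure variable.} Let $\prob_h$ denote the sampling distribution over $\safeset_{\delta|h}$. Since $h_L$ is fixed before verification and chosen independently of $\safetyFilter$, $\safeset_{\delta|h}$ is a deterministic set, so $\prob_h$ is well defined. By Assumption~\ref{assump:ds}, the closed-loop map $\jstate_0\mapsto \jtraj_{\jstate_0}$ under $\safetyFilter(\cdot,\policy^{\task,\ego})$ and the fixed $\policy^\oppo$ induces a fixed trajectory distribution. Any stochasticity in the rollout is independent across samples. Therefore the indicators $Y_i := \mathbf{1}\{\cost_{\safetyFilter}(\jtraj_{\jstate^i_0})\le 0\}$ are i.i.d. Bernoulli$(p)$, where
\[
p := \prob_{\jstate\in\safeset_{\delta|h}}\bigl(\cost_{\safetyFilter}(\jtraj_\jstate)\le 0\bigr).
\]

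\emph{Step 2: the binomial tail bound.} The count is $k=\sum_i Y_i$. A minor point here is to reconcile the strict inequality $<0$ used in counting with the $\le 0$ in the statement, either by counting $\le 0$ or by noting that ties at $0$ have measure zero. The conclusion fails only on the event that $p>\epsilon$ yet at most $k$ failures are observed. Define the test as certifying $\epsilon$ whenever $\sum_{i=0}^{k}\binom{N}{i}\epsilon^i(1-\epsilon)^{N-i}\le\beta$. If $p>\epsilon$, the probability of observing a count $K$ satisfying this condition is bounded as follows. The binomial CDF $F_{N,q}(k)$ is decreasing in $q$, so $F_{N,p}(k)\le F_{N,\epsilon}(k)$. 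The set of $k$ accepted by the test is a down-set $\{0,\dots,k^*\}$, where $k^*$ is the largest $k$ with $F_{N,\epsilon}(k)\le\beta$. Hence
\[
\Pr(K\le k^*)=F_{N,p}(k^*)\le F_{N,\epsilon}(k^*)\le\beta .
\]
This gives the claim with confidence at least $1-\beta$. Equivalently, one can simply invoke \autoref{prop:lin_cp} verbatim with $\safeset_\delta$ replaced by $\safeset_{\delta|h}$ and $\policy^\shield$ replaced by the filtered closed loop, since its proof uses no property of the sampling set beyond i.i.d. sampling.

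\emph{Main obstacle.} The delicate step is justifying the i.i.d. structure, not the binomial tail bound. This requires two things. First, $h_L$ must not be fit on the calibration rollouts, which is why independence of $h_L$ from $\safetyFilter$ and from the calibration data is stressed. Second, rejection sampling into $\safeset_{\delta|h}$ must yield exact i.i.d. draws from the conditional distribution; this holds because acceptance of each proposal is decided independently. I would state these conditions explicitly and then defer the remaining tail bound to \autoref{prop:lin_cp} and~\cite{lin2024verification}.
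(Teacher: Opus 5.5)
Your proof is correct, but it takes a more elementary route than the paper's.

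The paper frames verification as split conformal prediction. It takes $-\cost_{\safetyFilter}$ as the nonconformity score with $\alpha=(k+1)/(N+1)$ and obtains the marginal bound $\prob(\cost_{\safetyFilter}>0)\ge (N-k)/(N+1)$. It then invokes the $\operatorname{Beta}(N-k,k+1)$ law of the conditional coverage of the rank-$(N-k)$ conformal quantile, and converts $1-F_{\mathrm{Beta}(N-k,k+1)}(1-\epsilon)$ into the binomial sum.

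You skip the conformal machinery entirely. You treat the failure count as $\operatorname{Bin}(N,p)$ and run a one-sided Clopper--Pearson test, using monotonicity of $F_{N,q}(k)$ in $q$ and the down-set structure of the acceptance region. The identity $F_{\mathrm{Beta}(N-k,k+1)}(1-\epsilon)=\sum_{i=0}^{k}\binom{N}{i}\epsilon^{i}(1-\epsilon)^{N-i}$ shows the two bounds coincide, so nothing is lost.

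The paper's route ties the result to the CP narrative and to Lin and Bansal's lemma, and gives the marginal-coverage statement as a by-product. Yours is more careful on three points the paper's sketch glosses over:
\begin{itemize}
\item $p$ is a deterministic number, so ``$\prob(p\le\epsilon)$'' is really a statement about the random count.
\item The Beta law holds for a rank fixed in advance, whereas $k$ is data-dependent; your down-set argument handles this directly.
\item You need no continuity of the score distribution beyond the tie at $0$, which you rightly flag. The paper implicitly needs the same no-tie condition to pass from $\cost_{\safetyFilter}\ge-\hat q$ to $\cost_{\safetyFilter}>0$.
\end{itemize}

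One small refinement: statistically, what matters is that $h_L$, $\valfunc$, and $\safetyFilter$ are all fixed before the calibration draws. Independence of $h_L$ from $\safetyFilter$ is a design recommendation in the paper, not a requirement of the theorem.
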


\begin{proof}
The proof follows directly from standard split \gls{CP} results and resembles that of~\cite[Theorem~3]{lin2024verification}.
We provide a brief sketch here for completeness.
Define the nonconformity score as the negative safety score $-\cost_{\safetyFilter}(\jtraj_\jstate)$ and set $\alpha = \frac{k+1}{N+1}$.
By~\cite[Theorem~1]{angelopoulos2023conformal}, we have 
$\mathbb{P}_{\jstate \in \safeset_{\delta | h}}\!\left(\cost_{\safetyFilter}(\jtraj_\jstate)\ge -\hat{q}\right)\ge 1-\alpha$, where quantile $\hat{q}$ is the $(N-k)$-th safety score, \ie, the largest negative score of $\cost_{\safetyFilter}(\cdot)$.
Therefore, we have that $\mathbb{P}_{\jstate \in \safeset_{\delta | h}}\!\left(\cost_{\safetyFilter}(\jtraj_\jstate)> 0\right)\ge \frac{N-k}{N+1}$.
In addition, by~\cite[Sec.~3.2]{angelopoulos2023conformal}, we can further show that the distribution of safety coverage admits a closed-form Beta distribution, \ie,
$\mathbb{P}_{\jstate \in \safeset_{\delta | h}}\!\left(\cost_{\safetyFilter}(\jtraj_\jstate)> 0\right) \sim \operatorname{Beta}(N-k,k+1)$.
Equivalently, we have
$
\mathbb{P}\!\left(p \le \epsilon\right)
= \mathbb{P}\!\left(1-p \ge 1-\epsilon\right)
= 1 - F_{\mathrm{Beta}(N-k,k+1)}(1-\epsilon)
= 1- \sum_{i=0}^{k} {\binom{N}{i}} \,\epsilon^{\,i} (1-\epsilon)^{\,N-i}
$, where $
p \;:=\; \mathbb{P}_{\jstate \in \safeset_{\delta | h}}\!\left(\cost_{\safetyFilter}(\jtraj_\jstate)\le 0\right)
$ and $F_{\mathrm{Beta}(\cdot,\cdot)}$ denotes the Beta CDF.
Hence, if $(\epsilon,\beta)$ satisfy
$
\sum_{i=0}^{k} {N \choose i}\epsilon^{i}(1-\epsilon)^{N-i} \le \beta,
$
then $\mathbb{P}(p \le \epsilon)\ge 1-\beta$, i.e., with probability at least $1-\beta$ we have
$
\mathbb{P}_{\jstate \in \safeset_{\delta | h}}\!\left(\cost_{\safetyFilter}(\jtraj_\jstate)\le 0\right)\le \epsilon.
$
\qed
\end{proof}

Once a trusted inference region $\safeset_{\delta | h}$ and \beliefsf~$\safetyFilter$ are verified with Algorithm~\ref{alg:offline_cp}, it is straightforward to deploy them online for interactive motion planning with safety guarantees, as shown in \autoref{fig:front_fig}.
In essence, $\safeset_{\delta | h}$ is used as an additional safety check to safety filter $\safetyFilter$'s own monitoring scheme.
It is only at the beginning of the interaction that we check if the initial information state $\jstate_0$ is within $\safeset_{\delta | h}$.
If so, we may apply belief-space safety filtering in the standard way as in~\cite{hu2023deception}, and the resulting trajectory is provably safe according to the verification results.
Otherwise, the system raises a flag and executes either \textsc{Reject()} or \textsc{Fallback()}.
The \textsc{Reject()} option prevents the system from applying policies filtered under $\safetyFilter$, since they may not be safe; for example, in assisted driving, this amounts to handing control back to the human driver.
The \textsc{Fallback()} option attempts to replace $\safetyFilter$ with a backup filter, if available, that is more conservative but certified safe at the given state $\jstate_0$.
The procedure of deploying a verified \beliefsf~is summarized in Algorithm \ref{alg:online}.
Note that Algorithm \ref{alg:online} may also be used in a receding horizon fashion by recursively checking if $\jstate_t \in \safeset_{\delta | h}$, which is useful when the actual control horizon is longer than the verification horizon $T$.

\begin{remark}
    Since \jist~couples safety verification with the quality of inference, it is expected to outperform \directcp~in settings where the cause of safety violations is dominated by incorrect inference, that is, the robot's runtime inference and neural safety filter are sufficiently accurate so that the ``correct-inference'' regime yields a substantially higher empirical safe rate.
    As both runtime inference (\eg, trajectory prediction models) and neural safety filter synthesis methods continue to improve in accuracy, \jist~can directly benefit from these advances by expanding the certified safe set without requiring additional verification data.
\end{remark}

\begin{remark}
    Independent of the safety filter, we can optionally run a conformal prediction procedure to quantify the reliability of the inference module.
    This evaluation can help inform the choice between \jist~and \directcp~by checking whether inference errors are sufficiently rare to warrant coupling safety with inference during verification.
\end{remark}


\begin{algorithm}[!ht]
\DontPrintSemicolon
\small
\caption{Offline CP-based Safety Verification with \jist}\label{alg:offline_cp}
\SetKwInput{KwRequire}{Input}
\KwRequire{Violation probability $\epsilon>0$, safe set level $\delta \geq 0$, inference score function $h_L$, learned belief-space value function $\valfunc$ and safety filter $\safetyFilter$, ego's task policy $\policy^{\task,\ego}$, opponent policy $\policy^\oppo$, sample size $N>0$, horizon $T>0$}
\SetKwInput{KwRequire}{Output}
\KwRequire{Verified trusted inference region $\safeset_{\delta | h}$ with confidence parameter $\beta$}
$k \gets 0$\;
\For{$i=0,1,\ldots,N-1$}{
    Sample information state $\jstate^i_0$ from $\safeset_{\delta | h}$\;
    Obtain trajectory $\jtraj_{\jstate^i} = (\jstate^i_0,\jstate^i_1,\ldots,\jstate^i_T)$ by forward simulating joint system $\jntdyn$ with $\safetyFilter$, $\policy^{\task,\ego}$, and $\policy^\oppo$\;
    Compute empirical safety score $\cost_{\safetyFilter}(\jstate_i) = \min_{t\in[0,T]} g(\state_t)$\;
    \If{$\cost_{\safetyFilter}(\jstate_i) < 0$}{
    $k \gets k + 1$
    }
}
Compute a confidence parameter $\beta$ such that $\sum_{i=0}^{k} {\binom{N}{i}} \,\epsilon^{\,i} (1-\epsilon)^{\,N-i} \leq \beta$\;
\end{algorithm}

\vspace{0em}

\begin{algorithm}[!ht]
\DontPrintSemicolon
\small
\caption{Safe Interactive Planning with Verified \beliefsf}\label{alg:online}
\SetKwInput{KwRequire}{Input}
\KwRequire{Trusted inference region $\safeset_{\delta | h}$, initial information state $\jstate_0$, learned belief-space safety filter $\safetyFilter$, ego's task policy $\policy^{\task,\ego}$, horizon $T>0$}
\SetKwInput{KwRequire}{Output}
\If{$\jstate_0 \in \safeset_{\delta | h}$}{
    \For{$t=0,1,\ldots,T$}{
    Execute filtered robot's control $\ctrl_t^\ego = \safetyFilter(\jstate_t,\policy^{\task,\ego})$\;
    Observe new state $\state_{t+1}$\;
    Update belief state: $\belstate_{t+1} \gets \beldyn({\belstate}_{t}, \obs_t)$
    }
}
\Else{\textsc{Reject()} or \textsc{Fallback()}}
\end{algorithm}

\vspace{0em}

\section{Case Study}
\label{sec:case_study}

\begin{figure}[!hbtp]
  \centering
  \includegraphics[width=1.0\columnwidth]{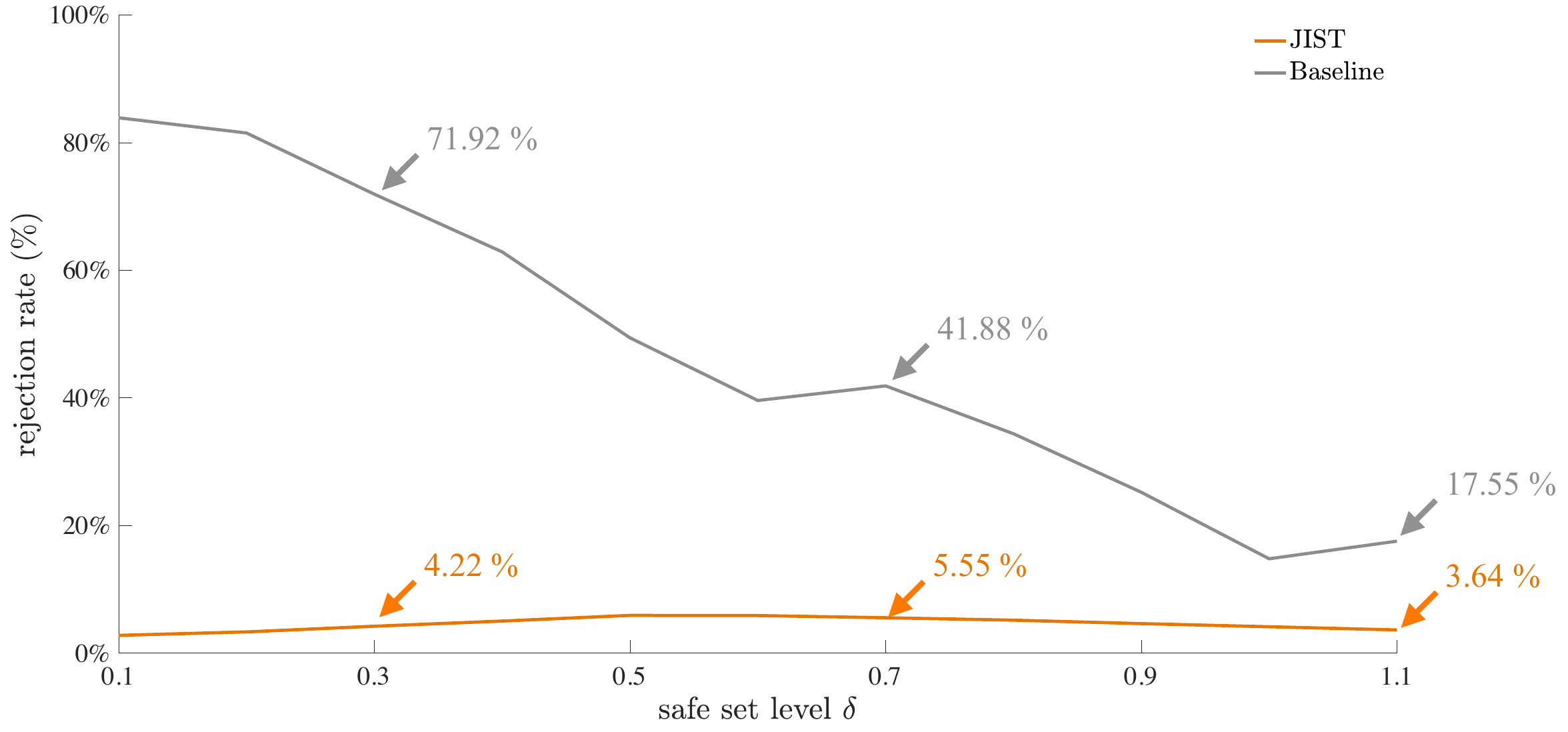}
  \caption{\label{fig:rr} Rejection rates of \beliefsf~computed from $20000$ randomized trials. For a fixed base safety level $\delta$ and safety coverage $\epsilon$, \jist~achieves a lower rejection rate than \directcp.
  }
  \vspace{-2em}
\end{figure}

In this section, we report verification and empirical results in the running example when the ego robot applies \jist.
For brevity, we refer readers to previous running example sections for details on the system setup, \beliefsf~synthesis and deployment, opponent modeling, and training of the inference score function $h_L(\cdot)$. 
Our goal is to evaluate whether coupling conformal verification with inference quality (\jist) can certify a more permissive belief-space safety filter than \directcp~while maintaining the same probabilistic safety guarantees.
Concretely, our evaluation focuses on two questions:
\begin{itemize}
\item for a fixed safe set level $\delta$ and calibration sample size $N$, whether \jist~yields a tighter safety coverage parameter $\epsilon$;
\item for a fixed safety coverage $\epsilon$ and calibration sample size $N$, whether \jist~certifies a more permissive filter, measured by rejection rate during online deployment.
\end{itemize}

For both methods, conformal prediction was performed using the same number of rollout samples ($N=20000$) and confidence parameter ($\beta = 10^{-5}$).
The inference score function $h_L(\cdot)$ used by \jist~was trained separately from, and independently of, the conformal prediction procedure, as described in the running example.
Importantly, these additional samples are only used to \textit{predict} the inference quality, and are not used by the conformal verification procedure itself; moreover, many contemporary inference modules, e.g., Transformer-based predictors, naturally provide confidence-related signals that can be used to construct such inference-quality scores.
With safe set level $\delta = 0.1$, we obtained $k=107$ failure cases from $N=20000$ trials with initial states sampled in the trusted inference region $\safeset_{\delta | h}$, leading to a certified safety level (coverage) $1-\epsilon = 99.21\%$, which is $1.87\%$ higher than that of \directcp.
Among the $107$ failed trials, inference was incorrect in $101$ cases ($94.4\%$).
We plot empirical and verified safety coverage of \jist~and \directcp~with varying safe set levels $\delta$ in \autoref{fig:sr}.
We observe that \jist~consistently outperforms \directcp~by $1$--$2\%$ in safety coverage for low to moderate values of the safe set level $\delta$.

\begin{figure}[!hbtp]
  \centering
  \includegraphics[width=1.0\columnwidth]{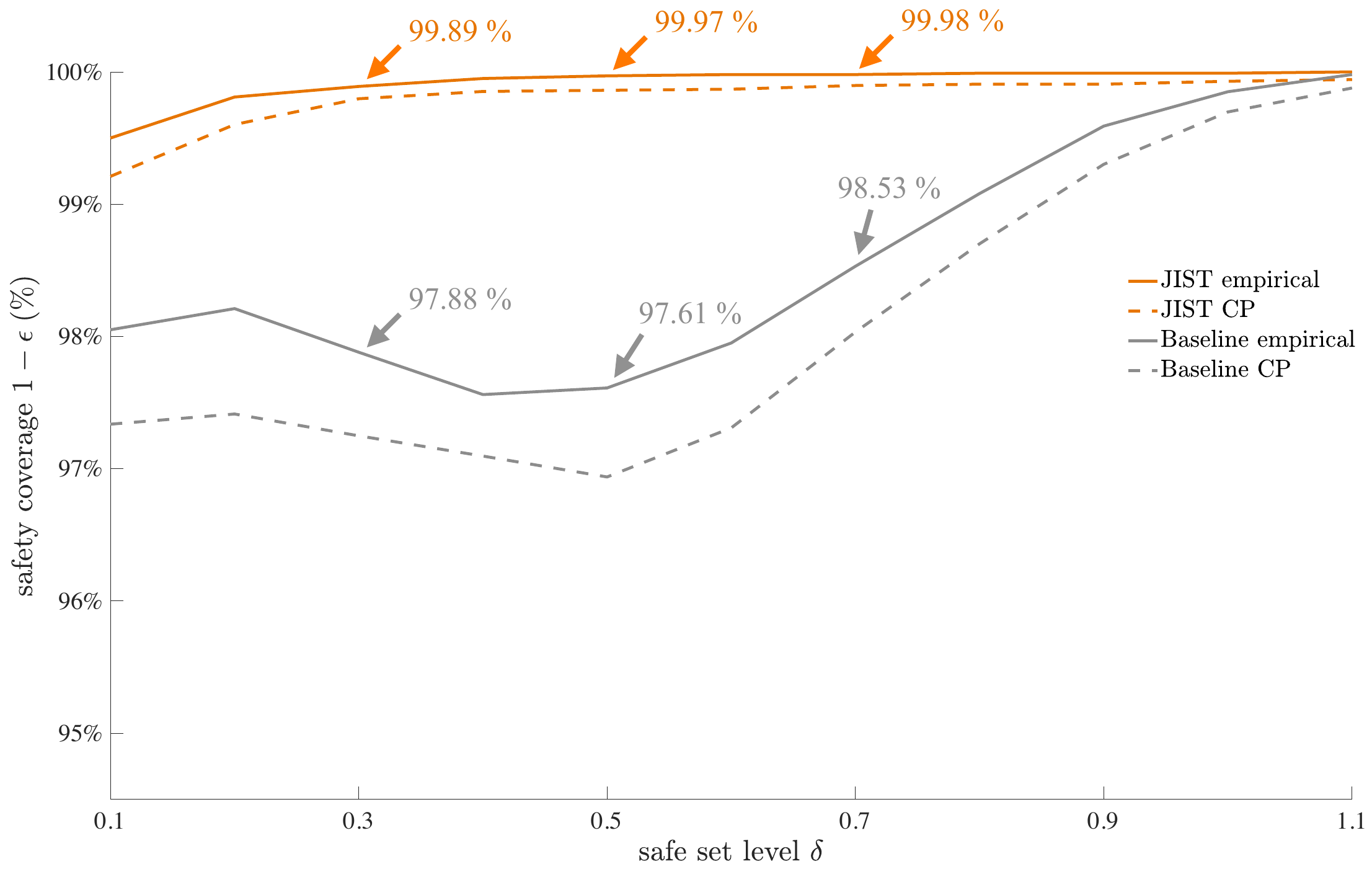}
  \caption{\label{fig:sr}Empirical and certified safe rates of \beliefsf~computed from $20000$ randomized trials. For a fixed safety level $\delta$, \jist~yields a tighter safety coverage than the \directcp~baseline.
  }
\end{figure}

Next, we compare rejection rates of \jist~and \directcp~under the same safety coverage when the verified \beliefsf~is deployed for online safe control in information space (\autoref{fig:front_fig}).
Given a fixed safe set level $\delta$, recall that \beliefsf~with \jist~rejects a trial when the information state is outside of the learned safe set ($\valfunc(\jstate_0) < \delta$) or the predicted inference accuracy is too low ($h_L(\jstate_0) < 0$).
In case of \directcp, a trial is rejected if $\valfunc(\jstate_0) < \delta_\baseline$, where $\delta_\baseline \geq \delta$ is a larger, and hence more restrictive safe set level that achieves the same safety coverage ($1-\epsilon$) as that of \jist.
We plot rejection rates with varying safe set levels $\delta$ in \autoref{fig:rr}.
With \jist, the rejection rate is as low as the false inference rate, since the learned \beliefsf~is generally capable of enforcing safety when the inference algorithm correctly predicts the opponent's control bound.
On the other hand, \beliefsf~verified with \directcp~requires a substantially larger safe set level $\delta_\baseline$ to achieve the same safety coverage, ultimately leading to a much higher rejection rate.
Finally, we visualize the planar ($x-y$ positions) slices of level sets with $\delta_\jist$ and $\delta_\baseline$ that achieve the same safety coverage $\epsilon$ in \autoref{fig:safeset}.

\begin{figure}[!hbtp]
  \centering
  \includegraphics[width=1.0\columnwidth]{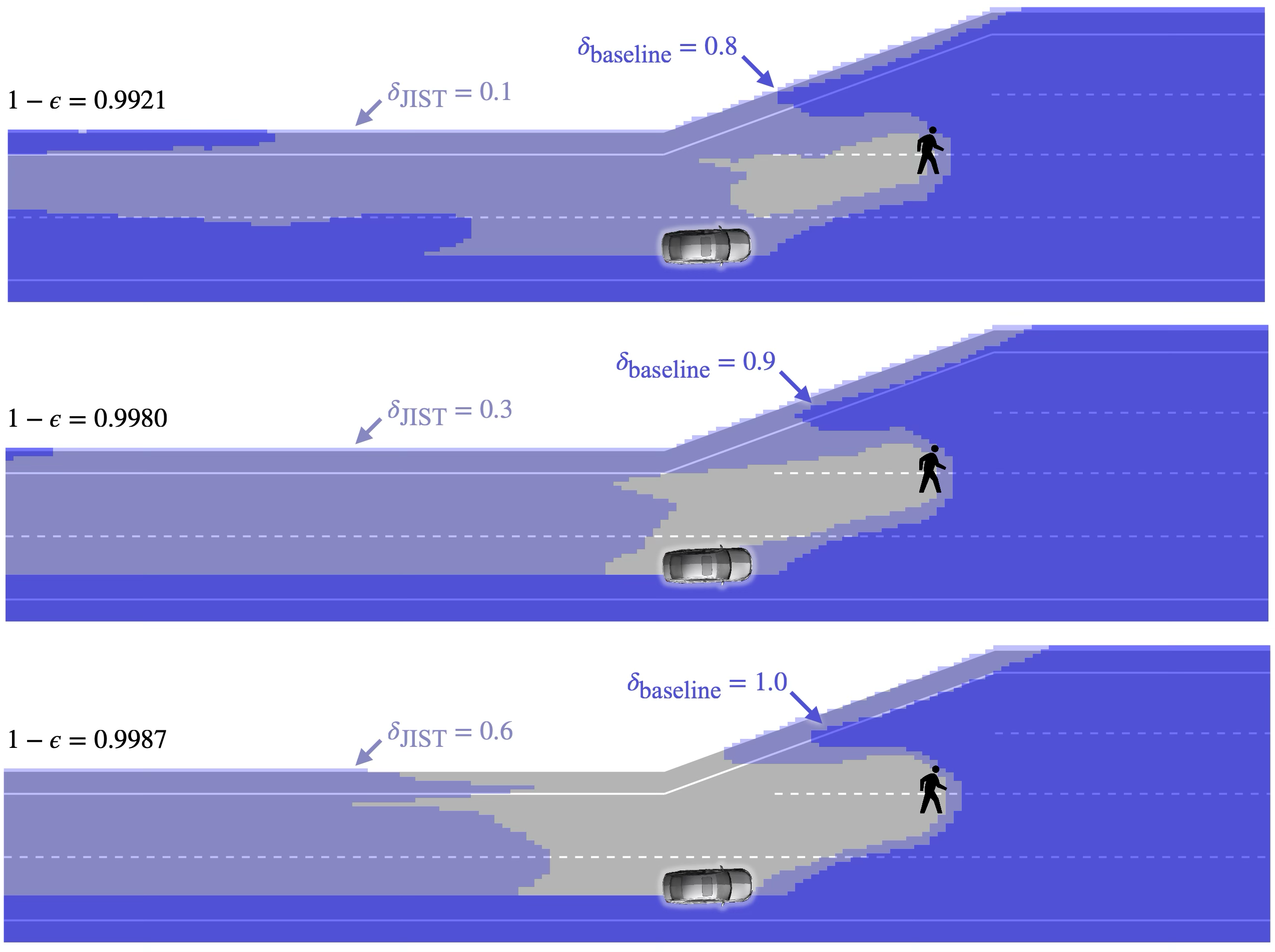}
  \caption{\label{fig:safeset}Planar slices of level sets with $\delta_\jist$ and $\delta_\baseline$ under safety coverage $\epsilon$.
  }
\end{figure}

\section{Limitations and Future Work}
\label{sec:limitation}

Our approach improves the filter's permissiveness by focusing verification and deployment on a region where inference is expected to be reliable.
This benefit hinges on having access to a reasonable inference score function $h_L(\cdot)$ and on the inference algorithm being sufficiently accurate.
When these conditions do not hold, a practical mitigation is a layered safety architecture in which the robot falls back to a conservative, non-belief safety filter whenever inference quality is low.

In line with standard conformal prediction, our safety guarantees are distributional rather than worst-case.
In particular, our analysis relies on standard exchangeability and i.i.d. assumptions as well as the independence of distribution from agent actions (Assumption \ref{assump:ds}).
When agent interactions induce a nontrivial distribution shift, these guarantees may no longer hold.
Recent advances in adaptive \gls{CP} that explicitly account for interaction-induced distribution shift offer a promising direction for strengthening our framework~\cite{mirzaeedodangeh2025safe,binny2025moved}.

Another promising direction worth exploring is to close the loop between verification and filter synthesis by designing an iterative regime that uses \gls{CP} outcomes to improve both the safety filter and the inference module.
Concretely, the \gls{CP}-certified safe set can guide the tuning of \beliefsf~parameters, \eg, reducing $\epsilon_\theta$ in \eqref{eq:pred_ctrl_bd} in regions where inference is accurate and uncertainty is low.
This suggests a verification-aware filter synthesis pipeline in which \gls{CP} is not merely a post-hoc certificate, but also provides supervision signals that actively inform filter design and data collection.
We are also eager to validate our proposed framework on \beliefsf~stacks with more complex inference modules, \eg, LSTM or Transformer-based predictors~\cite[Sec.~4.3]{hu2023deception}.

\section{Conclusions}
\label{sec:conclusions}

In this paper, we studied the problem of verifying closed-loop safety for belief-space safety filters in interactive planning settings where both runtime inference and learned safety filters can be imperfect.
Our key idea is to couple verification with inference quality: rather than certifying the filtered system end-to-end, we verify and deploy the filter within a region where the robot's inference is expected to be reliable.
This approach preserves the simplicity of conformal prediction while avoiding unnecessary conservativeness that arises when rare inference failures are conflated with errors of the safety filter itself.
In a simulated vehicle--pedestrian interaction benchmark, our proposed verification algorithm certified a substantially more permissive filter than a standard conformal prediction baseline, achieving higher safety coverage at the same base safe-set level and lower rejection rate when deployed online.


%
%
%
%

\bibliography{references,_acronyms}
\bibliographystyle{splncs04}

\end{document}

%% file: headers/commands.tex
\usepackage{multicol}
\usepackage[bookmarks=true]{hyperref}

\usepackage{float}
\usepackage{amsfonts}
\usepackage{comment}
\usepackage{times}
\usepackage{amsmath}
\usepackage{changepage}
\usepackage{amssymb}
\usepackage{graphicx}
\usepackage{adjustbox}
\usepackage{latexsym}
\usepackage{enumerate}
\usepackage[font=small,belowskip=0pt]{caption}
\usepackage{mathtools}
\usepackage{algcompatible}
\usepackage{algpseudocode}
\usepackage[ruled,vlined,linesnumbered]{algorithm2e}
\usepackage{bbm}
\usepackage{xcolor}
\usepackage{bm}
\usepackage{booktabs}
\usepackage{comment}
\usepackage[xindy, acronyms, nowarn]{glossaries}   %

\usepackage{fontawesome5}

\definecolor{jhublue}{HTML}{002D72} 
\definecolor{lavender}{HTML}{9E8FB0}
\definecolor{porange}{HTML}{E77500} 

\SetCommentSty{mycommfont}

\usepackage[breakable]{tcolorbox}
\usepackage{setspace}
\definecolor{exampleback}{rgb}{0.6, 0.6, 0.6}
\newenvironment{runningexample}[0]
{   \singlespacing
    \begin{tcolorbox}[standard jigsaw,center,breakable,colframe=black,width=\linewidth,opacityback=0,colbacktitle=exampleback,boxrule=0.4pt]
    \small
}
{\end{tcolorbox}}

\usepackage{url}
\usepackage{hyperref}
 \hypersetup{
     colorlinks=true,
     linkcolor=blue,
     filecolor=blue,
     citecolor=blue,      
     urlcolor=black,
     }
\usepackage{diagbox}

%% file: headers/notation.tex
\usepackage{fontawesome5}  

\newcommand{\p}[1]{\smallskip \noindent \textbf{{#1}.}}

\newcommand{\reals}{\mathbb{R}}

\newcommand{\compl}{\text{c}}  

\renewcommand{\emptyset}{{\text{\O}}}  

\newcommand{\distr}{{\mathcal{D}}}
\newcommand{\prob}{{\mathbb{P}}}

\newcommand{\belspace}{\Delta}

\newcommand{\state}{{x}}
\newcommand{\jstate}{{z}}
\newcommand{\jsset}{\mathcal{Z}}

\newcommand{\ctrl}{{u}}

\newcommand{\obsspace}{{\mathcal{Y}}}

\newcommand{\belstate}{b}
\newcommand{\beltraj}{\textbf{\belstate}}

\newcommand{\statespace}{\mathcal{X}}

\newcommand{\traj}{{\mathbf{\state}}}
\newcommand{\jtraj}{{\mathbf{\jstate}}}

\newcommand{\xset}{{\mathcal{X}}}

\newcommand{\cset}{{\mathcal{U}}}

\newcommand{\csetest}{\widehat{\cset}}

\newcommand{\nx}{{n_\state}}

\newcommand{\me}{{m_\ego}}
\newcommand{\mo}{{m_\oppo}}

\newcommand{\dyn}{{f}} 

\newcommand{\beldyn}{f_L}

\newcommand{\jntdyn}{F}

\newcommand{\valfunc}{{V}}

\newcommand{\policy}{{\pi}}

\newcommand{\tpolicy}{\tilde{\policy}}

\newcommand{\failure}{{\mathcal{F}}}
\newcommand{\failureset}{\failure}

\newcommand{\safeSet}{{\Omega}}
\newcommand{\safeset}{{\safeSet}}
\newcommand{\safeSetMax}{\safeSet^*}

\newcommand{\ego}{{e}}

\newcommand{\oppo}{{o}}

\newcommand{\cost}{{J}}
\newcommand{\costinf}{{\cost_L}}

\newcommand{\shield}{\text{\tiny{\faShield*}}}

\newcommand{\task}{{\text{task}}}

\newcommand{\policyTask}{{\policy^\task}}

\newcommand{\safetyFilter}{{\phi}}

\newcommand{\ms}{~\text{m}/\text{s}}

\newcommand{\obs}{{y}}

\newcommand{\beliefsf}{{\textsc{BeliefSF}}}
\newcommand{\directcp}{{\textsc{DirectCP}}}
\newcommand{\jist}{{\textsc{JIST}}}

\newcommand{\type}{\text{class}}
\newcommand{\goal}{\text{goal}}
\newcommand{\ped}{\text{ped}}
\newcommand{\seg}{\text{seg}}
\newcommand{\eg}{\text{e.g.}}
\newcommand{\ie}{\text{i.e.}}
\newcommand{\baseline}{\text{baseline}}

\newcommand{\princeton}[1]{\ifthenelse{\boolean{include-notes}}{\textcolor{orange}{#1}}{}}

%% file: headers/gls.tex
\glsdisablehyper
\makeglossaries

\newglossaryentry{JIST}
{
  name={JIST},
  description={Joint Inference-Safety Test},
  first={Joint Inference-Safety Test (\glsentrytext{JIST})},
}

\newglossaryentry{CP}
{
  name={CP},
  description={conformal prediction},
  first={conformal prediction (\glsentrytext{CP})},
}

\newglossaryentry{LSE}
{
  name={LSE},
  plural={LSEs},
  description={local Stackelberg equilibrium},
  first={local Stackelberg equilibrium (\glsentrytext{LSE})},
  descriptionplural={local Stackelberg equilibria},
  firstplural={local Stackelberg equilibria (\glsentryplural{LSE})}
}

\newglossaryentry{FSE}
{
  name={FSE},
  plural={FSEs},
  description={feedback Stackelberg equilibrium},
  first={feedback Stackelberg equilibrium (\glsentrytext{FSE})},
  descriptionplural={feedback Stackelberg equilibria},
  firstplural={feedback Stackelberg equilibria (\glsentryplural{FSE})}
}

\newglossaryentry{DSE}
{
  name={DSE},
  plural={DSEs},
  description={differential Stackelberg equilibrium},
  first={differential Stackelberg equilibrium (\glsentrytext{DSE})},
  descriptionplural={differential Stackelberg equilibria},
  firstplural={differential Stackelberg equilibria (\glsentryplural{DSE})}
}

\newglossaryentry{RL}
{
  name={RL},
  description={reinforcement learning},
  first={reinforcement learning (\glsentrytext{RL})},
}

\newglossaryentry{HJI}
{
  name={HJI},
  description={Hamilton--Jacobi--Isaacs},
  first={Hamilton--Jacobi--Isaacs (\glsentrytext{HJI})},
}

\newglossaryentry{GAS}
{
  name={GAS},
  description={globally asymptotically stable},
  first={globally asymptotically stable (\glsentrytext{GAS})},
}

\newglossaryentry{a.s.}
{
  name={a.s.},
  description={almost surely},
  first={almost surely (\glsentrytext{a.s.})},
}

\newglossaryentry{MDP}
{
  name={MDP},
  description={Markov decision process},
  first={Markov decision process (\glsentrytext{MDP})},
}

\newglossaryentry{SAC}
{
  name={SAC},
  description={Soft Actor--Critic},
  first={Soft Actor--Critic (\glsentrytext{SAC})},
}

\newglossaryentry{tGDA}
{
  name={$\tau$-GDA},
  description={gradient descent-ascent with finite timescale separation},
  first={gradient descent-ascent with finite timescale separation (\glsentrytext{tGDA})},
}

\newglossaryentry{MAGICS}
{
  name={MAGICS},
  description={Minimax Actors Guided by Implicit Critic Stackelberg},
  first={Minimax Actors Guided by Implicit Critic Stackelberg (\glsentrytext{MAGICS})},
}

\newglossaryentry{ISAACS}
{
  name={ISAACS},
  description={Iterative Soft Adversarial Actor--Critic for Safety},
  first={Iterative Soft Adversarial Actor--Critic for Safety (\glsentrytext{ISAACS})},
}

\newglossaryentry{ISAACStbg}
{
  name={ISAAC$\mathcal{S}$},
  description={Iterative Soft Adversarial Actor--Critic with Stackelberg learning dynamics for Safety},
  first={Iterative Soft Adversarial Actor--Critic with Stackelberg learning dynamics for Safety (\glsentrytext{ISAACStbg})},
}

\newglossaryentry{A2C}
{
  name={A2C},
  description={Advantage Actor--Critic},
  first={Advantage Actor-Critic (\glsentrytext{A2C})},
}

\newglossaryentry{iHvp}
{
  name={iHvp},
  description={inverse hessian vector product},
  first={inverse-Hessian-vector product
  (\glsentrytext{iHvp})},
}

\newglossaryentry{jvp}
{
  name={jvp},
  description={jacobian vector product},
  first={Jacobian-vector product
  (\glsentrytext{jvp})},
}

\newglossaryentry{GSE}
{
  name={GSE},
  description={global Stackelberg equilibrium},
  first={global Stackelberg equilibrium (\glsentrytext{GSE})}
}

\newglossaryentry{BNP}
{
  name={B\&P},
  description={Branch-and-Play},
  first={Branch-and-Play (\glsentrytext{BNP})},
}

\newglossaryentry{LQ}
{
  name={LQ},
  description={linear quadratic},
  first={linear quadratic (\glsentrytext{LQ})},
}

\newglossaryentry{ILQR}
{
  name={ILQR},
  description={iterative linear quadratic regulator},
  first={iterative linear quadratic regulator (\glsentrytext{ILQR})},
}

\newglossaryentry{ATC}
{
  name={ATC},
  description={air traffic control},
  first={air traffic control (\glsentrytext{ATC})},
}

\newglossaryentry{STP}
{
  name={STP},
  description={sequential trajectory planning},
  first={sequential trajectory planning (\glsentrytext{STP})},
}

\newglossaryentry{FCFS}
{
  name={FCFS},
  description={first-come-first-served},
  first={first-come-first-served (\glsentrytext{FCFS})},
}

\newglossaryentry{MAPF}
{
  name={MAPF},
  description={multi-agent pathfinding},
  first={Multi-agent pathfinding (\glsentrytext{MAPF})},
}

\newglossaryentry{MPC}
{
  name={MPC},
  description={model predictive control},
  first={model predictive control (\glsentrytext{MPC})},
}

\newglossaryentry{NOD}
{
  name={NOD},
  plural={NOD},
  description={nonlinear opinion dynamics},
  first={nonlinear opinion dynamics (\glsentrytext{NOD})},
  descriptionplural={nonlinear opinion dynamics},
  firstplural={nonlinear opinion dynamics (\glsentryplural{NOD})}
}

\newglossaryentry{DNN}
{
  name={DNN},
  description={deep neural network},
  first={deep neural network (\glsentrytext{DNN})},
}

\newglossaryentry{MLE}
{
  name={MLE},
  description={maximum likelihood estimation},
  first={maximum likelihood estimation (\glsentrytext{MLE})},
}

\newglossaryentry{ODG}
{
  name={ODG},
  description={opinion-guided dynamic game},
  first={opinion-guided dynamic game (\glsentrytext{ODG})},
}

\newglossaryentry{MLP}
{
  name={MLP},
  description={multilayer perceptron},
  first={multilayer perceptron (\glsentrytext{MLP})},
}

\newglossaryentry{E2E-BC}
{
  name={E2E-BC},
  description={end-to-end behavior cloning},
  first={End-to-end behavior cloning (\glsentrytext{E2E-BC})},
}

\newglossaryentry{HJ}
{
  name={HJ},
  description={Hamilton--Jacobi},
  first={Hamilton--Jacobi (\glsentrytext{HJ})}
}

\newglossaryentry{DCBF}
{
  name={DCBF},
  description={discrete-time control barrier function},
  first={discrete-time control barrier function (\glsentrytext{DCBF})}
}

\newglossaryentry{CBF}
{
  name={CBF},
  description={control barrier function},
  first={control barrier function (\glsentrytext{CBF})}
}

\newglossaryentry{Q-CBF}
{
  name={Q-CBF},
  description={state--action control barrier function},
  first={state--action control barrier function (\glsentrytext{Q-CBF})}
}

\newglossaryentry{ODD}
{
  name={ODD},
  description={Operational Design Domain},
  first={operational design domain (\glsentrytext{ODD})}
}

\newglossaryentry{LRSF}
{
  name={LRSF},
  description={Last-Resort Safety Filter},
  first={last-resort safety filter (\glsentrytext{LRSF})}
}

\newglossaryentry{HCSF}
{
  name={HCSF},
  description={Human-Centered Safety Filter},
  first={human-centered safety filter (\glsentrytext{HCSF})}
}

\newglossaryentry{NLP}
{
  name={NLP},
  description={nonlinear programming problem},
  first={nonlinear programming problem (\glsentrytext{NLP})},
}

\newglossaryentry{AC}
{
  name={AC},
  description={Assetto Corsa},
  first={Assetto Corsa (\glsentrytext{AC})},
}

\newglossaryentry{ANOVA}
{
    name={ANOVA},
    description={Analysis of Variance},
    first={analysis of variance (\glsentrytext{ANOVA})}
}

\newglossaryentry{SME}
{
    name={SME},
    description={Simple Main Effects},
    first={simple main effects (\glsentrytext{SME})},
}

\newglossaryentry{HSD}
{
    name={HSD},
    description={Honestly Significant Difference},
    first={honestly significant difference (\glsentrytext{HSD})},
}

\newglossaryentry{ECDF}
{
    name={ECDF},
    description={Empirical Cumulative Distribution Function},
    first={empirical cumulative distribution function (\glsentrytext{ECDF})},
}

\newglossaryentry{OCP}
{
    name={OCP},
    description={Optimal Control Problem},
    first={optimal control problem (\glsentrytext{OCP})}
}

\newglossaryentry{AI}
{
    name={AI},
    description={Artificial Intelligence},
    first={artificial intelligence (\glsentrytext{AI})}
}

\newglossaryentry{HRI}
{
    name={HRI},
    description={Human--Robot Interaction},
    first={human--robot interaction (\glsentrytext{HRI})}
}